\documentclass{article}

\usepackage[preprint]{neurips_2023}

\title{Characterizing Tradeoffs in Language Model Decoding with Informational Interpretations}

\author{%
  Chung-Ching Chang \\
  Google Research \\
  \texttt{ccchang@google.com} \\
   \And
   William W. Cohen \\
   Google Deepmind \\
   \texttt{wcohen@google.com} \\
   \AND
   Yun-Hsuan Sung \\
   Google Research \\
   \texttt{yhsung@google.com} \\
}

\usepackage[utf8]{inputenc} 
\usepackage[T1]{fontenc}    
\usepackage{hyperref}       
\usepackage{url}            
\usepackage{booktabs}       
\usepackage{amsfonts}       
\usepackage{nicefrac}       
\usepackage{microtype}      

\usepackage{bbm}
\usepackage{tikz}
\usepackage{amsthm}
\usepackage{amsmath} 
\newtheorem{definition}{Definition}[section]
\newtheorem{theorem}{Theorem}[section]

\newtheorem{lemma}{Lemma}[section]

\DeclareMathOperator*{\pmi}{\text{pmi}}

\begin{document}
\maketitle

\begin{abstract}

We propose a theoretical framework for formulating language model decoder algorithms with dynamic programming and information theory. With dynamic programming, we lift the design of decoder algorithms from the logit space to the action-state value function space, and show that the decoding algorithms are consequences of optimizing the action-state value functions. Each component in the action-state value function space has an information theoretical interpretation. With the lifting and interpretation, it becomes evident what the decoder algorithm is optimized for, and hence facilitating the arbitration of the tradeoffs in sensibleness, diversity, and attribution.

\end{abstract}

\section{Introduction}

The scaling of large language models (LLMs)~\citep{anil2023palm,openai2023gpt4} has led to the recent qualitative breakthroughs in generative models. On the other hand, serving these LLMs is costly due to their scale. One common approach to reduce the serving cost is to serve a pool of homogeneous models, and each application dynamically adjusts the model for its individual needs through in-context few-shot prompting~\citep{radford2019language}, soft prompting~\citep{lester2021power}, and prefix tuning~\citep{li2021prefixtuning}, instead of finetuning application/task-specific models. These dynamic adjustments are appealing since the underlying models are unmodified, and this allows the traffic of all downstream applications to be pooled and leads to higher hardware utilization. 

Recent works on parameter-efficient finetuning address the issue of dynamically adjusting the model for finetuning. Instead of finetuning all LLM parameters, they freeze LLM parameters, and add and finetune a relatively small number of parameters on top of existing layers, through low-rank approximations~\citep{hu2021lora} or by rescaling inner activations with learned vectors~\citep{liu2022fewshot}. Recent work~\citep{huang2023lorahub} also shows that we can dynamically reuse/swap the tuned parameters between tasks or do mixed-task inference, which makes parameter-efficient finetuning practical in production.

In parallel, there is also a trend for optimizing LLMs with sequence level human/AI guidance, as opposed to predicting only the next token or corrupted spans of text. In Reinforcement Learning Human Feedback (RLHF)~\citep{ouyang2022training}, it first creates a reward model to predict how humans will rate the quality of text generated by the LLM. This reward model is trained on a dataset of human-rated text samples. Next, the reward model is used to finetune the LLM using reinforcement learning. Most RLHF algorithms adapt LLMs through finetuning and, hence, encounter the aforementioned issue of specializing models, a recent work~\citep{santacroce2023efficient} tries to address this issue through parameter-efficient finetuning.

The big question is: can we dynamically adjust LLMs towards sequence level human preference on the fly without modifying the model? Apparently~\citep{santacroce2023efficient} sheds some light on the possibility through parameter-efficient finetuning, but can we extend the possibility to other approaches, such as in-context learning?

In text-to-image generations, the classifier~\citep{dhariwal2021diffusion} and classifier-free~\citep{ho2022classifierfree,saharia2022photorealistic,yu2022scaling} guidance have proven that we can guide model preference dynamically in predictions. In the reverse diffusion process, the gradient of the scoring function is used to gradually convert a Gaussian noise into a realistic image. With classifier guidance, the gradient is taken over both the scoring function and a classifier -- a classifier that guides the denoising process to a specific class, e.g. a dog image. With classifier-free guidance, the gradient is taken over the same scoring function twice, one with evidence in the input, e.g. ``a dog image``, and one without. By contrasting the gradient with or without the evidence, we can significantly improve the image generation towards the preference in the evidence. Some recent work shows that the same methodologies, the classifier guidance and the classifier-free guidance, are applicable to language modeling.

For the classifier guidance, GeDi~\citep{krause2020gedi} shows improvements on topic control and detoxification; critic-guided decoding~\citep{kim2022criticguided} shows improvements on topic control, sentiment control, and detoxification; FUDGE~\citep{Yang_2021} shows improvements on couplet completion in poetry, topic control, and formality change in translations. Controlled decoding~\cite{mudgal2023controlled} shows improvements on dialog safety and response length. Diffusion-LM~\citep{li2022diffusionlm} further extends the method to non-autoregressive language model based on continuous diffusions. All these works demonstrate how to train the classifiers and how to combine the classification scores to guide the decoding towards the preference of the classifiers.

For the classifier-free guidance, context-aware decoding (CAD)~\citep{shi2023trusting} shows improvements in summarization and knowledge conflicting tasks; PREADD~\citep{pei2023preadd} shows improvements in toxic output mitigation, gender bias reduction, and sentiment control. In particular, PREADD showed that the evidence can be an instruction, and you can adjust the guidance scale to positive (negative) value to follow (disobey) the instruction, respectively.

While the dynamic adjustments in prediction shows great improvement in attribution, another line of work showed that there exists tradeoffs. In classical decoding algorithms,~\citep{aksitov2023characterizing} showed that increasing sampling temperature promotes diversity while sacrificing the sensibleness and attributions.~\citep{chang2023kldivergence} showed the tradeoff curves between diversity and attributions (to the evidence) for the classical top-p, top-k, and temperatures sampling. They also proposed a new sampling algorithm to mitigate the tradeoffs. The discovery may be just a tip of the iceberg. Could other dynamic adjustment algorithms also face certain tradeoffs?

To cover different kinds of dynamical adjustment algorithms in the analysis, we first reformulate the decoding algorithms as a dynamic programming (DP) problem, similar to \cite{kim2022criticguided,mudgal2023controlled}, where you can incorporate sequence level preference as a future reward. The classical algorithms that don't care about sequence level preference degenerates the setup. With DP, we lift the decoding algorithm design into a policy optimization problem in the action-state value function space. Surprisingly, it turns out the action-state value function is composed of items with information theoretical interpretations. This makes it clear what each decoding algorithm is optimized for, and is helpful for arbitrating the tradeoffs in design.

The main contribution of this paper is the proposal of a theoretical framework with dynamic programming and information theory to consolidate the synthesis of the decoding algorithms in the action-state value function space. The paper is arranged as follows: In Section \ref{sec:lm_as_dp}, we formulate the autoregressive language model decoding as a DP problem, following the definitions in Appendix \ref{sec:dp_definitions}. In Section \ref{sec:literature_as_dp}, we reformulated several previous works in the proposed action-state value function space, and stated the results as Theorems. For completeness, we also reformulate classical decoding algorithms in Section \ref{sec:classical_sampling_algorithm}. The detailed action-state value function construction for each of the algorithms is illustrated in Section \ref{sec:action_value_function_constructions}. Finally, we provide information theoretical interpretation for each term in the action-state value function, including KL-divergence, entropy, and cross-entropy, that helps to justify what the decoder algorithm is optimized for. We rewrite and summarize the theorems in Table \ref{table:theorems}. The action-state value function formulation makes it easier to understand how a hyperparameter affects the tradeoffs in the generations.
\section{Language Model Autoregressive Decoding as DP}
\label{sec:lm_as_dp}

A generative language model $G$ maps an input sequence $x = \{x_t\}$, optionally with an evidence $e= \{e_t\}$, to the output sequence $y = \{y_t\}$ where $x_t \in \mathcal{V}$, $y_t \in \mathcal{V}$, $e_t \in \mathcal{V}$,
and $\mathcal{V}$ is a set of vocabulary tokens. We formulate decoding the whole output sequence $y = \{y_t\}$ as an episode in DP\footnote{For DP, we follow the notation in~\citep{silver2015}. For convenience, we repeat the definitions in Appendix \ref{sec:dp_definitions}.}. At each decoding step, the decoder observe the state
$s_t = \{e, x, y_{<t}\} \in \mathcal{S}$ and take action $a_t = {y_t} \in \mathcal{A}$. The state transition from one decoding step to the next is an identity function $\mathcal{P}^a_{ss'} = \mathbbm{1}_{s'=s \cup a}$. Without loss of generality, we assume $y_{T-1} = \langle EOS \rangle$ for some $T > 0$. We set the discount factor $\gamma = 1.0$. The reward $r_t \in \mathcal{R}$ are all zero except for $r_T$. The value of the final reward $r_T$ is calculated by a binary discriminator $D:S_{T-1} \times A_{T-1} \rightarrow \{0, 1\}$ where
\[
   r_T = D(s_{T-1}, a_{T-1}) = \begin{cases}1 & \text{if } y \text{ is attributable to the evidence $e$;} \\ 0 & \text{otherwise.} \end{cases}
\]
We use the notation $s^-_t = \{x, y_{<t}\} \in \mathcal{S}$ for the state without the evidence. In Figure \ref{fig:lm_as_dp}, we illustrate the sequence of $s_t$, $a_t$, and $r_t$ in decoding steps.

The discriminator $D$ is optimized for attributions. Please note that $D$ can be optimized for other properties, e.g. safety or politeness. We choose attributions for two reasons: First, it can help to bridge the gap between classifier guidance and classifier-free guidance in the next Section; Second, attribution is transformative if the evidence $e$ is an instruction, instead of a passage. For example, let $e$ be the tokens for ``Please be polite when answering the question``. When the response has high attribution, it should follow the instruction and be polite. Thus, we can easily convert other desired properties into an attribution problem through instructions.

We assume that $D$ is given and the training of $D$ is beyond the scope of this paper. Examples to train $D$ for other tasks than attribution can be found in~\citep{krause2020gedi,kim2022criticguided,Yang_2021,li2022diffusionlm,mudgal2023controlled}.

\begin{figure}
\centering
\begin{tikzpicture}
    \coordinate (s0) at (1, 1);
    \coordinate (a0) at (2, 1);
    \coordinate (s1) at (3, 1);
    \coordinate (r1) at (3, 2);
    \coordinate (a1) at (4, 1);
    \coordinate (s2) at (5, 1);
    \coordinate (r2) at (5, 2);
    \coordinate (atm2) at (7, 1);
    \coordinate (stm1) at (8.3, 1);
    \coordinate (rtm1) at (8.3, 2);
    \coordinate (atm1) at (9.6, 1);
    \coordinate (st) at (10.9, 1);
    \coordinate (rt) at (10.9, 2);
    \node (n_s0) at (s0) {$s_0$};
    \node (n_a0) at (a0) {$a_0$};
    \node (n_s1) at (s1) {$s_1$};
    \node (n_r1) at (r1) {$r_1$};
    \node (n_a1) at (a1) {$a_1$};
    \node (n_s2) at (s2) {$s_2$};
    \node (n_r2) at (r2) {$r_2$};
    \node (n_atm2) at (atm2) {$a_{T-2}$};
    \node (n_stm1) at (stm1) {$s_{T-1}$};
    \node (n_rtm1) at (rtm1) {$r_{T-1}$};    
    \node (n_atm1) at (atm1) {$a_{T-1}$};
    \node (n_st) at (st) {$s_T$};
    \node (n_rt) at (rt) {$r_T$};
    \draw[-latex, shorten <=0.25cm, shorten >=0.25cm] (s0) -- (a0);
    \draw[-latex, shorten <=0.25cm, shorten >=0.25cm] (a0) -- (r1);
    \draw[-latex, shorten <=0.25cm, shorten >=0.25cm] (a0) -- (s1);
    \draw[-latex, shorten <=0.25cm, shorten >=0.25cm] (s1) -- (a1);
    \draw[-latex, shorten <=0.25cm, shorten >=0.25cm] (a1) -- (r2);
    \draw[-latex, shorten <=0.25cm, shorten >=0.25cm] (a1) -- (s2);
    \draw[dotted, shorten <=0.25cm, shorten >=0.4cm] (s2) -- (atm2);
    \draw[-latex, shorten <=0.25cm, shorten >=0.25cm] (atm2) -- (rtm1);
    \draw[-latex, shorten <=0.40cm, shorten >=0.40cm] (atm2) -- (stm1);
    \draw[-latex, shorten <=0.40cm, shorten >=0.40cm] (stm1) -- (atm1);    
    \draw[-latex, shorten <=0.25cm, shorten >=0.25cm] (atm1) -- (rt);
    \draw[-latex, shorten <=0.4cm, shorten >=0.25cm] (atm1) -- (st);
    \node [rotate=-90,text=gray,align=left,text width=2.3cm](n_s0_footnote) at ([shift={(0,-1.4)}]s0.center) {$=\{e, x\}$};
    \node [rotate=-90,text=gray,align=left,text width=2.3cm](n_a0_footnote) at ([shift={(0,-1.4)}]a0.center) {$=y_0$};
    \node [rotate=-90,text=gray,align=left,text width=2.3cm](n_s1_footnote) at ([shift={(0,-1.4)}]s1.center) {$=\{e, x, y_0\}$};
    \node [rotate=-90,text=gray,align=left,text width=2.3cm](n_a1_footnote) at ([shift={(0,-1.4)}]a1.center) {$=y_1$};
    \node [rotate=-90,text=gray,align=left,text width=2.3cm](n_s2_footnote) at ([shift={(0,-1.4)}]s2.center) {$=\{e, x, y_0, y_1\}$};
    \node [rotate=-90,text=gray,align=left,text width=2.3cm](n_atm2_footnote) at ([shift={(0,-1.4)}]atm2.center) {$=y_{T-2}$};
    \node [rotate=-90,text=gray,align=left,text width=3.0cm](n_stm1_footnote) at ([shift={(0,-1.75)}]stm1.center) {$=\{e, x, y_{<T-1}\}$};
    \node [rotate=-90,text=gray,align=left,text width=3.0cm](n_atm1_footnote) at ([shift={(0,-1.75)}]atm1.center) {$=y_{T-1} = \langle EOS \rangle$};
    \node [rotate=-90,text=gray,align=left,text width=3.0cm](n_st_footnote) at ([shift={(0,-1.75)}]st.center)  {$=\{e, x, y_{<T}\}$};
    \node [text=gray,align=left,text width=.6cm](n_r1_footnote) at ([shift={(0.6,0)}]r1.center) {$=0$};
    \node [text=gray,align=left,text width=.6cm](n_r2_footnote) at ([shift={(0.6,0)}]r2.center) {$=0$};
    \node [text=gray,align=left,text width=.6cm](n_rtm1_footnote) at ([shift={(0.8,0)}]rtm1.center) {$=0$};
\end{tikzpicture}
\caption{Language Model Decoding as DP}
\label{fig:lm_as_dp}
\end{figure}
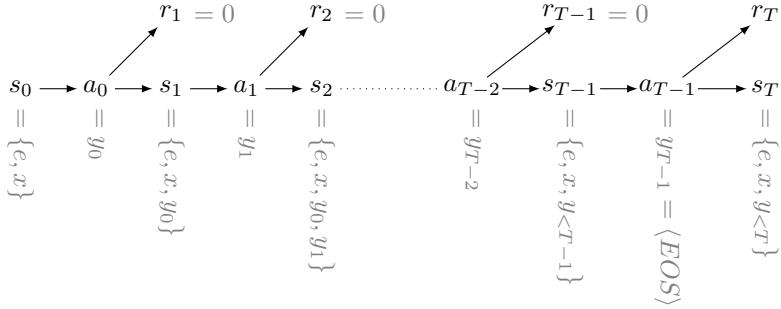
\section{Decoding Algorithms in the DP framework}

For novel decoding algorithms in the literature~\citep{shi2023trusting,pei2023preadd,chang2023kldivergence} and the classical greedy and temperature sampling algorithms, we formulate the action-state value functions and the corresponding decoding algorithms in theorem statements. We leave the construction of the action-state value functions in Section \ref{sec:action_value_function_constructions}, the derivation of decoding algorithms from action-state value functions in appendices, and the information theoretical interpretation in Section \ref{sec:interpretation}.

\subsection{Literature in the DP framework}
\label{sec:literature_as_dp}

We compare three different decoding policies in the literature: 
\begin{enumerate}
    \item Classifier guidance decoding~\citep{krause2020gedi,kim2022criticguided,Yang_2021,li2022diffusionlm,mudgal2023controlled}.
    \item Classifier-free guidance decoding~\citep{shi2023trusting,pei2023preadd}.
    \item KL-divergence guided temperature sampling~\citep{chang2023kldivergence}.
\end{enumerate}
In the first two items, the cited papers do not use the name with \textit{classifier}. We borrow the concept from the generative text-to-image diffusion models where the classifier guidance decoding refers to a generation guided by a classifier/discriminator~\citep{dhariwal2021diffusion} while the classifier free guidance decoding refers to a generation guided by two distributions: one with evidence in the input and one without~\citep{ho2022classifierfree,saharia2022photorealistic,yu2022scaling}. The cited papers~\citep{krause2020gedi,kim2022criticguided,Yang_2021,li2022diffusionlm,shi2023trusting,pei2023preadd,mudgal2023controlled} applies the concept to language modeling.

For each of the theorems, we formulate the action-state value function and the corresponding explicit optimal policy. For the background in DP, please review Appendix \ref{sec:dp_definitions} for the definitions and notations. Additionally, 
\begin{itemize}
    \item $\mathbb{P}_G(\cdot|s_t)$, shorthanded as $\mathbb{P}_G$, stands for the pretrained language model that takes state $s_t$ as input and outputs the next token distribution (without temperature scaling);
    \item $\mathbb{P}_\pi(r_T=1 | s_t)$ stands for the probability of $r_T = 1$ given the current state $s_t$, assuming all future DP steps are following policy $\pi$ for decoding.
\end{itemize}
We also use information theoretical notations:
\begin{itemize}
    \item KL-divergence $D_{KL}(\pi||\mathbb{P}_G)$ is a shorthand for $D_{KL}(\pi(\cdot|s_t)||\mathbb{P}_G(\cdot|s_t))$;
    \item KL-divergence $D_{KL}(\mathbb{P}_G||\mathbb{P}_G^-)$ is a shorthand for $D_{KL}\left(\mathbb{P}_G(\cdot|s_t)||\mathbb{P}_G(\cdot|s^-_t)\right)$;
    \item Entropy $H(\pi)$ is a shorthand for $H(\pi(\cdot|s_t))$;
    \item Cross entropy $H(\pi||\mathbb{P}_G)$ is a shorthand for $H(\pi(\cdot|s_t)||\mathbb{P}_G(\cdot|s_t))$.
\end{itemize}
Finally, we use $\mathbb{R}_{>0}$ for the set of positive real numbers, and $\mathbb{E}$ for the expectation.

\begin{theorem}[Classifier guidance decoding in the DP Framework] \label{theorem:classifier_guidance}
The decoding algorithm in the classifier guidance decoding is a stochastically optimal policy in the DP framework as follows: 
\begin{align*}
    \pi_*(a_t|s_t) &= \underset{\pi}{\arg\max} \left\{ \lambda \mathbb{E}_{\pi(\cdot|s_t)}\left(\log\frac{\mathbb{P}_\pi(r_T=1 | s_t, \cdot)}{\mathbb{P}_\pi(r_T=1 | s_t)}\right) - D_{KL}\left(\pi||\mathbb{P}_G\right) \right\} \\
    &\propto \left(\frac{\mathbb{P}_*(r_T=1 | s_t, a_t)}{\mathbb{P}_*(r_T=1 | s_t)}\right)^\lambda \mathbb{P}_G(a_t|s_t)
\end{align*}
where $\lambda \in \mathbb{R}_{>0}$ is a hyperparameter.
\end{theorem}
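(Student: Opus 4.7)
The plan is to recognize the objective as a standard KL-regularized single-step policy optimization, so long as one adopts the Bellman/fixed-point reading under which the $\mathbb{P}_\pi$ terms inside the expectation refer to the \emph{future-rollout} probabilities and are therefore not functions of $\pi(\cdot \mid s_t)$. Concretely, I would first argue that at state $s_t$ the quantity $\mathbb{P}_\pi(r_T{=}1 \mid s_t, a_t)$ depends only on the policy at states $s_{t+1}, s_{t+2}, \ldots$ (since $a_t$ is conditioned on), and that the denominator $\mathbb{P}_\pi(r_T{=}1 \mid s_t)$ is constant in $a_t$ and is pinned down by self-consistency once we assume future steps follow the optimal $\pi_*$. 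With this convention, I define $Q_*(s_t, a_t) := \log\frac{\mathbb{P}_*(r_T=1 \mid s_t, a_t)}{\mathbb{P}_*(r_T=1 \mid s_t)}$ and note that $Q_*(s_t, \cdot)$ is a fixed real-valued function when solving for $\pi_*(\cdot \mid s_t)$.

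The core calculation is then a one-line Gibbs variational identity. I would introduce the (un-normalized) target $\tilde\pi(a \mid s_t) \propto \mathbb{P}_G(a \mid s_t)\exp(\lambda Q_*(s_t, a))$ with partition function $Z(s_t) = \sum_a \mathbb{P}_G(a \mid s_t)\exp(\lambda Q_*(s_t, a))$, and rewrite
\begin{align*}
\lambda \mathbb{E}_{\pi}[Q_*(s_t, \cdot)] - D_{KL}(\pi \,\|\, \mathbb{P}_G)
&= \sum_a \pi(a \mid s_t)\log \frac{\mathbb{P}_G(a \mid s_t)\exp(\lambda Q_*(s_t,a))}{\pi(a \mid s_t)} \\
&= \log Z(s_t) - D_{KL}\bigl(\pi \,\|\, \tilde\pi\bigr).
\end{align*}
Since $\log Z(s_t)$ is independent of $\pi$ and $D_{KL} \ge 0$ with equality iff the arguments coincide, the unique maximizer is $\pi_*(\cdot \mid s_t) = \tilde\pi(\cdot \mid s_t)$. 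Substituting the definition of $Q_*$ and pulling the $\mathbb{P}_*(r_T{=}1 \mid s_t)^\lambda$ factor into the proportionality constant (legitimate because it does not depend on $a_t$) yields exactly the displayed closed form.

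The main obstacle is not the variational step itself but justifying the passage from $\mathbb{P}_\pi$ (inside the $\arg\max$) to $\mathbb{P}_*$ (in the explicit policy). The clean way to handle this is to phrase the claim as a Bellman fixed point: assume by backward induction on $t$ that for all $t' > t$ the policy $\pi_*(\cdot \mid s_{t'})$ achieves the stated form, so $\mathbb{P}_*(r_T{=}1 \mid s_t, a_t)$ and $\mathbb{P}_*(r_T{=}1 \mid s_t)$ are well-defined constants in the current optimization, and then apply the Gibbs argument at step $t$. The base case $t = T-1$ is immediate from the definition $r_T = D(s_{T-1}, a_{T-1})$, which makes $\mathbb{P}_\pi(r_T{=}1 \mid s_{T-1}, a_{T-1}) = D(s_{T-1}, a_{T-1})$ independent of $\pi$. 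This inductive framing is also what licenses reading the ``$\pi$'' under the expectation as the same $\pi_*$ that appears in the conclusion, and removes any apparent circularity in the statement.
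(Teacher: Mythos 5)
Your proof is correct and follows essentially the same route as the paper: the paper's proof simply invokes the derivation in Rafailov et al.\ (Appendix A.1), which is exactly the Gibbs variational identity you spell out, rewriting the KL-regularized objective as $\log Z(s_t) - D_{KL}(\pi\,\|\,\tilde\pi)$ so the maximizer is the exponentially tilted distribution. Your additional backward-induction argument justifying the replacement of $\mathbb{P}_\pi$ by $\mathbb{P}_*$ is a welcome extra step that the paper leaves implicit.
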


Please note that in Theorem \ref{theorem:classifier_guidance} we can drop the denominator $\mathbb{P}_*(r_T=1 | s_t)$ in the formula as it is a constant given $s_t$. We keep it in the Theorem to be parallel to Theorem \ref{theorem:classifier_free}. Furthermore, we can rewrite the fraction in the optimal policy as a function of the state value function and the action-state value function, as suggested in Lemma~\ref{lemma:binary_discriminator} and similar to~\citep{mudgal2023controlled}.

\begin{theorem}[Classifier-free guidance decoding in the DP Framework] \label{theorem:classifier_free}
The decoding algorithm in the classifier-free guidance decoding is a stochastically optimal policy in the DP framework as follows: 
\begin{align*}
    \pi_*(a_t|s_t) &= \underset{\pi}{\arg\max} \left\{ \lambda \mathbb{E}_{\pi(\cdot|s_t)}\left(\log\frac{\mathbb{P}_G(\cdot|s_t)}{\mathbb{P}_G(\cdot|s_t^-)}\right) - D_{KL}\left(\pi||\mathbb{P}_G\right) \right\} \\
    &\propto \left(\frac{\mathbb{P}_G(a_t|s_t)}{\mathbb{P}_G(a_t|s_t^-)}\right)^\lambda \mathbb{P}_G(a_t|s_t)
\end{align*}
where $\lambda \in \mathbb{R}_{>0}$ is a hyperparameter.
\end{theorem}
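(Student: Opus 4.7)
The plan is to reduce the variational objective to a standard Gibbs-type minimization, exactly mirroring the strategy one would use for Theorem~\ref{theorem:classifier_guidance}. First I would write out both terms explicitly as sums over the vocabulary~$\mathcal{V}$, so that the objective becomes
\[
J(\pi) \;=\; \sum_{a}\pi(a|s_t)\left[\lambda\log\frac{\mathbb{P}_G(a|s_t)}{\mathbb{P}_G(a|s_t^-)} \;+\; \log\mathbb{P}_G(a|s_t) \;-\; \log\pi(a|s_t)\right],
\]
where the middle term comes from expanding $-D_{KL}(\pi\|\mathbb{P}_G)$. The key algebraic move is to fuse the two $\log\mathbb{P}_G$ contributions into one logarithm by exponent manipulation, exposing the unnormalized target
\[
\tilde u(a) \;=\; \left(\frac{\mathbb{P}_G(a|s_t)}{\mathbb{P}_G(a|s_t^-)}\right)^{\lambda}\mathbb{P}_G(a|s_t).
\]

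Next I would introduce the normalizer $Z(s_t)=\sum_{a}\tilde u(a)$, which depends only on $s_t$ (and on the fixed $\lambda$), and the normalized distribution $u=\tilde u/Z$. After this rewriting the objective collapses to
\[
J(\pi) \;=\; \log Z(s_t) \;-\; D_{KL}\!\left(\pi \,\|\, u\right),
\]
so maximization over $\pi$ is equivalent to minimization of $D_{KL}(\pi\|u)$. Gibbs' inequality then gives the unique maximizer $\pi_*=u$, i.e.\ $\pi_*(a_t|s_t)\propto \tilde u(a_t)$, which is exactly the claimed closed form. No Lagrangian is strictly needed since the constraint $\sum_a\pi(a|s_t)=1$ is absorbed into the KL formulation.

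Structurally this argument is entirely parallel to the one for the classifier-guidance theorem; the only swap is that the ``reward-like'' factor $\mathbb{P}_\pi(r_T{=}1|s_t,\cdot)/\mathbb{P}_\pi(r_T{=}1|s_t)$ is replaced by the evidence ratio $\mathbb{P}_G(\cdot|s_t)/\mathbb{P}_G(\cdot|s_t^-)$, and no discriminator-related identity (such as Lemma~\ref{lemma:binary_discriminator}) is invoked. The main obstacle, accordingly, is not conceptual but notational: one must track the $\lambda$ exponent carefully when fusing the two logarithms, and verify that $\mathbb{P}_G(a|s_t^-)>0$ wherever $\mathbb{P}_G(a|s_t)>0$ so that $\tilde u$ is well-defined on the support of interest; this holds automatically under the usual full-support softmax parameterization of $\mathbb{P}_G$.
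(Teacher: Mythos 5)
Your proof is correct and follows essentially the same route as the paper: the paper proves this theorem by substituting the evidence ratio $\log\frac{\mathbb{P}_G(a_t|s_t)}{\mathbb{P}_G(a_t|s_t^-)}$ for the discriminator term in the proof of Theorem~\ref{theorem:classifier_guidance}, which in turn is the standard KL-regularized (Gibbs) argument of \citet{rafailov2023direct}, Appendix~A.1, that you have simply written out explicitly via the normalizer $Z(s_t)$ and Gibbs' inequality. The only difference is presentational: you spell out the normalization-and-KL-collapse step that the paper delegates to the cited reference.
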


\begin{theorem}[KL-divergence guided temperature sampling in the DP Framework] \label{theorem:policy_kl}
The decoding algorithm in KL-divergence guided temperature sampling is a stochastically optimal policy in the DP framework as follows: 
\begin{align*}
    \pi_*(a_t|s_t) &= \underset{\pi}{\arg\max} -\Bigl\{
    (\lambda(s_t, s_t^-)+1) D_{KL}\left(\pi||\mathbb{P}_G\right) + \lambda(s_t, s_t^-) H(\pi) \Bigr\} \\
    &\propto \mathbb{P}_G(a_t|s_t)^{\lambda(s_t, s_t^-) + 1}
\end{align*}
where $H$ is the entropy, $\lambda(s_t, s_t^-) = h\left(D_{KL}\left(\mathbb{P}_G||\mathbb{P}_G^-\right)\right)$ and $h: \mathbb{R}_{\ge 0} \rightarrow \mathbb{R}_{\ge 0}$ is any monotonically increasing function.
\end{theorem}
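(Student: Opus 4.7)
The plan is to show that the composite objective $-\{(\lambda(s_t,s_t^-)+1)D_{KL}(\pi\|\mathbb{P}_G) + \lambda(s_t,s_t^-)H(\pi)\}$ reduces, up to a constant in $\pi$, to a single KL divergence from $\pi$ to the tilted distribution proportional to $\mathbb{P}_G^{\lambda+1}$, and then invoke Gibbs' inequality. Throughout, I abbreviate $\lambda = \lambda(s_t, s_t^-)$ and write $p(\cdot) = \mathbb{P}_G(\cdot \mid s_t)$.

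First I would expand $D_{KL}(\pi\|p) = \sum_a \pi(a)\log\pi(a) - \sum_a \pi(a)\log p(a)$ and $H(\pi) = -\sum_a \pi(a)\log\pi(a)$. Substituting into the bracketed expression, the $\sum_a \pi(a)\log\pi(a)$ terms combine with coefficient $(\lambda+1) - \lambda = 1$, yielding
\begin{equation*}
(\lambda+1)D_{KL}(\pi\|p) + \lambda H(\pi) = \sum_a \pi(a)\log\pi(a) - (\lambda+1)\sum_a \pi(a)\log p(a) = \sum_a \pi(a)\log\frac{\pi(a)}{p(a)^{\lambda+1}}.
\end{equation*}
Note that $\lambda$ depends only on $s_t$ and $s_t^-$ and is therefore constant under the $\arg\max$ over $\pi(\cdot \mid s_t)$, so pulling the exponent $\lambda+1$ inside is legitimate.

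Next, let $Z = \sum_a p(a)^{\lambda+1}$, which is strictly positive and finite on the finite vocabulary $\mathcal{V}$. Writing $q(a) = p(a)^{\lambda+1}/Z$, the right-hand side above equals $D_{KL}(\pi\|q) - \log Z$. Since $\log Z$ does not depend on $\pi$, maximizing the original objective is equivalent to minimizing $D_{KL}(\pi\|q)$. By Gibbs' inequality, this KL divergence is non-negative and vanishes precisely when $\pi = q$, so the unique optimum is $\pi_*(a_t \mid s_t) = p(a_t)^{\lambda+1}/Z \propto \mathbb{P}_G(a_t \mid s_t)^{\lambda(s_t,s_t^-)+1}$, which is the claimed form.

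I do not anticipate a serious obstacle: the only non-routine step is recognizing the cancellation between the $\log\pi$ coefficients in $(\lambda+1)D_{KL}$ and $\lambda H$ that collapses the problem into a single tilted KL. The monotone function $h$ in the definition of $\lambda(s_t,s_t^-)$ plays no role in the optimization argument itself; it merely guarantees that $\lambda \ge 0$, so that $p^{\lambda+1}$ is a well-defined unnormalized measure and the direction of the temperature adjustment behaves monotonically in the divergence $D_{KL}(\mathbb{P}_G\|\mathbb{P}_G^-)$.
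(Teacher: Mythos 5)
Your proof is correct and takes essentially the same route as the paper: the paper rewrites the objective via the cross-entropy identity into the form $\mathbb{E}_{\pi}\log\mathbb{P}_G(\cdot|s_t)^{\lambda}-D_{KL}\left(\pi||\mathbb{P}_G\right)$ and then delegates the explicit optimizer to the derivation of Rafailov et al.\ (Appendix A.1), whose content is precisely the tilted-distribution-plus-Gibbs argument you write out. Your version simply inlines that final normalization and Gibbs' inequality step, making the argument self-contained, and your observation that $\lambda(s_t,s_t^-)$ is constant under the $\arg\max$ over $\pi(\cdot|s_t)$ matches the paper's implicit treatment.
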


As a side note, in the original paper~\citep{chang2023kldivergence}, $h(x) = 2^{\frac{x}{\sigma}} - 1$ for some hyperparameter $\sigma \in \mathbb{R}_{>0}$. With this convention, the optimal policy
\[
  \pi_*(a_t|s_t) \propto \mathbb{P}_G(a_t|s_t)^{f\left(D_{KL}\left(\mathbb{P}_G||\mathbb{P}_G^-\right)\right)^{-1}}
\]
where $f(x) = (h(x) + 1)^{-1} = 0.5^{\frac{x}{\sigma}}$ is the dynamic temperature applied to $\mathbb{P}_G$ in each decoding step.

In Theorem \ref{theorem:policy_kl}, we modified the temperature sampling algorithm so that we can reuse $\mathbb{P}_G$ in the formulation. Instead of applying temperature $T$ to the logits with one softmax function, we cascade two softmax functions, the first with temperature $1.0$ (a.k.a. $\mathbb{P}_G$) and the second with temperature $T$ as a function of $f(\cdot)$. This modification should not change the result meaningfully. In fact, the modification has a better information theoretic interpretation.

\subsection{Classical Sampling Algorithms in the DP Framework} \label{sec:classical_sampling_algorithm}

We consolidate the classical temperature sampling and greedy algorithm into the framework by viewing them as a degenerated DP problem where we don't care about the future reward.

\begin{theorem}[Temperature sampling in the DP Framework] \label{theorem:policy_t}
The decoding algorithm in the conventional temperature sampling is a stochastically optimal policy in the DP framework as follows: 
\begin{align*}
    \pi_*(a_t|s_t) &= \underset{\pi}{\arg\max}-\left\{
    \frac{1}{T} \cdot D_{KL}\left(\pi||\mathbb{P}_G\right) + \left(\frac{1}{T} - 1\right) H(\pi) \right\} \\
    &= \underset{\pi}{\arg\max}-\frac{1}{T}\bigg\{
    T \cdot D_{KL}\left(\pi||\mathbb{P}_G\right) + \left(1-T\right) H(\pi, \mathbb{P}_G) \bigg\} \\
    &\propto \mathbb{P}_G(a_t|s_t)^\frac{1}{T}
\end{align*}
where $T \in \mathbb{R}_{>0}$ is the temperature.
\end{theorem}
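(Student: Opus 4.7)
The plan is to first unify the two displayed forms of the objective. Using the identity $D_{KL}(\pi\|\mathbb{P}_G) = H(\pi,\mathbb{P}_G) - H(\pi)$, a short substitution into either form produces the common expression
\[
J(\pi) \;=\; -\tfrac{1}{T} H(\pi,\mathbb{P}_G) + H(\pi) \;=\; \tfrac{1}{T}\sum_a \pi(a|s_t)\log\mathbb{P}_G(a|s_t) \;-\; \sum_a \pi(a|s_t)\log\pi(a|s_t).
\]
Verifying that both the first and second lines of the theorem reduce to this $J(\pi)$ immediately implies their $\arg\max$ coincide, so it suffices to maximize $J$.

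Next, I would treat this as a constrained optimization on the probability simplex $\{\pi(\cdot|s_t) : \pi(a|s_t)\ge 0,\ \sum_a \pi(a|s_t) = 1\}$ and introduce a Lagrange multiplier $\mu$ for normalization. Setting $\partial/\partial \pi(a|s_t) = 0$ gives
\[
\tfrac{1}{T}\log\mathbb{P}_G(a|s_t) \;-\; \log\pi(a|s_t) \;-\; 1 \;-\; \mu \;=\; 0,
\]
hence $\pi(a|s_t) = e^{-1-\mu}\,\mathbb{P}_G(a|s_t)^{1/T}$. Absorbing the scalar prefactor into the normalizer $Z(s_t) = \sum_{a'} \mathbb{P}_G(a'|s_t)^{1/T}$ yields the announced proportionality $\pi_*(a_t|s_t) \propto \mathbb{P}_G(a_t|s_t)^{1/T}$.

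To confirm this stationary point is the unique global maximum, I would invoke strict concavity of $J$: the cross-entropy term $\sum_a \pi(a|s_t)\log\mathbb{P}_G(a|s_t)$ is linear in $\pi$, while $H(\pi)$ is strictly concave on the open simplex (the Hessian is diagonal with entries $-1/\pi(a|s_t) < 0$). For $T > 0$, $J$ is therefore strictly concave, so the KKT stationary point is the unique maximizer. Boundary cases where some $\pi(a|s_t) \to 0$ are handled by the convention $0\log 0 = 0$ together with $\mathbb{P}_G(a|s_t)^{1/T} > 0$, which places the maximum in the interior (restricted to the support of $\mathbb{P}_G$ to keep $D_{KL}$ finite).

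No step here is a real obstacle; the heaviest lifting is simply the algebraic reconciliation of the two forms of the objective, and the main thing to be careful about is the sign of the coefficient on $H(\pi)$ for $T > 1$ versus $T < 1$ --- this is what makes presenting the common simplified form $J(\pi)$ essential, since strict concavity is not obvious from the first line alone but is transparent after the rewrite.
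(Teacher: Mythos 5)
Your proof is correct, but it takes a different route from the paper's. The paper dispatches this theorem in one line: it reduces to Theorem \ref{theorem:policy_kl} by setting a static $\lambda = \frac{1}{T}-1$ and invoking $H(\pi,\mathbb{P}_G) = D_{KL}(\pi||\mathbb{P}_G) + H(\pi)$; the proof of Theorem \ref{theorem:policy_kl} in turn rewrites the objective as $\mathbb{E}_{\pi}\log\mathbb{P}_G(\cdot|s_t)^{\lambda} - D_{KL}(\pi||\mathbb{P}_G)$ and cites the closed-form solution of KL-regularized reward maximization from Rafailov et al.\ (Appendix A.1), i.e.\ $\pi_* \propto \mathbb{P}_G \cdot \exp(\text{reward})$. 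You instead give a self-contained first-principles argument: you reconcile the two displayed objectives into the common form $J(\pi) = -\frac{1}{T}H(\pi,\mathbb{P}_G) + H(\pi)$ (your algebra checks out, and it matches the paper's intended reduction since $-\lambda H(\pi,\mathbb{P}_G) - D_{KL}(\pi||\mathbb{P}_G)$ with $\lambda=\frac{1}{T}-1$ equals $J$ up to the constant factor), then solve the constrained maximization on the simplex with a Lagrange multiplier and confirm global optimality via strict concavity of $J$. What your approach buys is explicitness and rigor on points the paper leaves implicit --- uniqueness of the maximizer, behavior at the simplex boundary, and the sign issue for the $H(\pi)$ coefficient when $T>1$ --- without relying on an external reference. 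What the paper's reduction buys is structural economy: it exhibits temperature sampling as the static-$\lambda$ special case of the KL-divergence guided sampling of Theorem \ref{theorem:policy_kl}, which is precisely the relationship the framework is meant to highlight, and it reuses a single derivation (the Gibbs/Boltzmann closed form) for all the stochastic policies in the paper.
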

\begin{proof}
Follow the proof in Theorem \ref{theorem:policy_kl} by setting static $\lambda = \frac{1}{T} - 1$ and the fact that $H(\pi, \mathbb{P}_G) = D_{KL}\left(\pi||\mathbb{P}_G\right) + H(\pi)$.
\end{proof}

In Theorem \ref{theorem:policy_t}, please note that when we slide $T$ from $1.0$ to $0.0$ in the second equation, we are interpolating the terms in the curly brackets from $D_{KL}\left(\pi||\mathbb{P}_G\right)$ to $H(\pi, \mathbb{P}_G)$.

\begin{theorem}[Greedy algorithm in the DP Framework] \label{theorem:policy_greedy}
The greedy algorithm is a deterministic optimal policy in the DP framework as follows: 
\begin{align*}
    \pi_*(a_t|s_t) &= \underset{\pi}{\arg\max}{-H(\pi, \mathbb{P}_G)}
    = \underset{\pi}{\arg\max} \; \mathbb{E}_\pi \log \mathbb{P}_G(a_t|s_t) \\
    &= \begin{cases}1 & \text{if } a_t = \underset{a}{\arg\max} \; \mathbb{P}_G(a|s_t); \\ 0 & \text{otherwise.} \end{cases}
\end{align*}
\end{theorem}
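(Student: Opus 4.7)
The plan is to reduce the claim to a one-step linear optimization on the probability simplex, bypassing any DP machinery, since the greedy policy is by definition myopic and only the single-step objective matters here.

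First I would verify the two equalities inside the $\arg\max$. Unfolding the definition gives $H(\pi,\mathbb{P}_G) = -\sum_{a} \pi(a|s_t)\log \mathbb{P}_G(a|s_t) = -\mathbb{E}_{a_t\sim\pi}\log \mathbb{P}_G(a_t|s_t)$, so $-H(\pi,\mathbb{P}_G) = \mathbb{E}_\pi \log \mathbb{P}_G(a_t|s_t)$. This exhibits the objective as a linear functional $\pi \mapsto \sum_a \pi(a|s_t)\, c_a$ on the simplex $\Delta(\mathcal{A})$, with fixed coefficients $c_a := \log \mathbb{P}_G(a|s_t) \in [-\infty,0]$.

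Second I would invoke the standard fact that a linear functional on a compact convex polytope attains its maximum at a vertex. The vertices of $\Delta(\mathcal{A})$ are exactly the deterministic Dirac distributions $\delta_{a}$, and the value of the objective at $\delta_a$ is $c_a = \log \mathbb{P}_G(a|s_t)$. Since $\log$ is strictly increasing on $(0,\infty)$, maximizing $\log \mathbb{P}_G(a|s_t)$ over $a$ is the same as maximizing $\mathbb{P}_G(a|s_t)$, which yields precisely the indicator policy claimed in the theorem. As a sanity check, one can recover the same conclusion by taking $T\to 0^+$ in Theorem~\ref{theorem:policy_t}: the normalized $\mathbb{P}_G(\cdot|s_t)^{1/T}$ concentrates on $\arg\max_a \mathbb{P}_G(a|s_t)$, and dividing the objective there by $1/T$ and sending $T\to 0$ recovers $-H(\pi,\mathbb{P}_G)$; but the direct vertex argument is cleaner and avoids the limit.

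The only real subtlety, and the step I would be most careful about, is the tie-breaking case: if several actions share the maximum value of $\mathbb{P}_G(\cdot|s_t)$, the linear objective is maximized on the entire face spanned by the corresponding vertices, so the optimizer is not unique and includes non-deterministic policies as well. I would address this by noting that any deterministic policy supported on $\arg\max_a \mathbb{P}_G(a|s_t)$ is optimal, and that the formula in the theorem simply selects one such policy, matching the usual convention for greedy decoding. With that caveat made explicit, the proof reduces to the two observations above and the monotonicity of $\log$.
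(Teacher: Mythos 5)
Your proof is correct, but it takes a genuinely different route from the paper. The paper proves this theorem as a corollary of Theorem~\ref{theorem:policy_t} (itself a corollary of Theorem~\ref{theorem:policy_kl} with static $\lambda = \frac{1}{T}-1$) by taking $\lim_{T\to 0}$, so that greedy decoding appears as the endpoint of the interpolation between $D_{KL}(\pi||\mathbb{P}_G)$ and $H(\pi,\mathbb{P}_G)$; you instead argue directly that $-H(\pi,\mathbb{P}_G)=\mathbb{E}_\pi \log \mathbb{P}_G(a_t|s_t)$ is a linear functional of $\pi$ on the simplex $\Delta(\mathcal{A})$, hence maximized at a vertex, i.e.\ at the Dirac policy on $\arg\max_a \mathbb{P}_G(a|s_t)$ by monotonicity of $\log$ (you even mention the $T\to 0$ limit only as a sanity check). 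Your vertex argument is more elementary and self-contained: it avoids the limit-and-$\arg\max$ interchange that the paper's one-line proof leaves implicit, and your explicit treatment of ties is a genuine improvement, since the theorem's stated equality between the policy-level $\arg\max$ and the indicator policy only holds verbatim when $\arg\max_a \mathbb{P}_G(a|s_t)$ is unique, and otherwise the indicator is just one optimizer on the optimal face. What the paper's route buys instead is the unifying picture emphasized in Section~\ref{sec:interpretation}: greedy, temperature sampling, and KL-guided temperature sampling all sit in one family of action-state value functions, with greedy recovered as the $T\to 0$ (equivalently $\lambda\to\infty$) degenerate case, which your standalone argument does not exhibit.
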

\begin{proof}
Follow the proof in Theorem \ref{theorem:policy_t} by taking $\lim_{T \to 0}$.
\end{proof}

For simplicity, we can present the deterministic optimal policy in Theorem \ref{theorem:policy_greedy} with an indicator function as $\mathbbm{1}_{a_t = \underset{a}{\arg\max} \; \mathbb{P}_G}$.
\section{Constructing Action-State Value Functions}
\label{sec:action_value_function_constructions}

In the previous theorems, we stated the action-state value function and the corresponding optimal policy. In this section, we will learn how to construct the action-state value function.
In the DP framework, we start by assuming that the reward $r_t \in \mathcal{R}$ are all zero except for the final binary $r_T$. A binary $r_T$ leads to an action-state value function being a pointwise mutual information\footnote{See Appendix \ref{sec:define_pmi} for the definition.} $\pmi(r_T=1, a_t|s_t)$.

\begin{lemma}[State and Action Values for Binary Discriminators] \label{lemma:binary_discriminator}
For a DP setup with binary $r_T$ and $r_t = 0$ for all $t < T$. For any policy $\pi$, the state value and action values are given by
\begin{align*}
    V_\pi(s_t) &= \mathbb{P}_\pi(r_T=1 | s_t) \\
    Q_\pi(s_t) &= \mathbb{P}_\pi(r_T=1 | s_t, a_t) 
\end{align*}
and the optimal deterministic policy is given by
\[
    a_* = \underset{a \in \mathcal{A}}{\arg\max} \; \log\frac{\mathbb{P}_*(r_T=1 | s_t, a_t)}{\mathbb{P}_*(r_T=1 | s_t)}.
\]
\end{lemma}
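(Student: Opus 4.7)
The plan is to unwind the definitions of the state-value and action-value functions from the DP setup specified in Section 2, exploit the sparse reward structure (only $r_T$ is nonzero, and it is binary), and then obtain the optimal policy by rescaling the $Q$-values by a factor that is constant in $a$.

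First I would start with the standard definition $V_\pi(s_t) = \mathbb{E}_\pi\bigl[\sum_{k \geq 0} \gamma^k r_{t+k+1} \mid s_t\bigr]$ from Appendix \ref{sec:dp_definitions}. Since $\gamma = 1$ and $r_k = 0$ for every $k < T$, the whole return collapses to a single term, $V_\pi(s_t) = \mathbb{E}_\pi[r_T \mid s_t]$. Because $r_T \in \{0,1\}$, the expectation equals $\mathbb{P}_\pi(r_T = 1 \mid s_t)$. The same manipulation applied to $Q_\pi(s_t,a_t) = \mathbb{E}_\pi\bigl[\sum_{k \geq 0} \gamma^k r_{t+k+1} \mid s_t, a_t\bigr]$ gives $Q_\pi(s_t,a_t) = \mathbb{P}_\pi(r_T = 1 \mid s_t, a_t)$.

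Next I would derive the optimal action. The Bellman optimality principle tells us that the greedy (deterministic) optimal policy is $a_* = \arg\max_a Q_*(s_t, a) = \arg\max_a \mathbb{P}_*(r_T = 1 \mid s_t, a)$. Since the denominator $\mathbb{P}_*(r_T = 1 \mid s_t) = V_*(s_t)$ does not depend on $a$ and is positive (assuming non-degenerate problems where the task is achievable), dividing inside the $\arg\max$ by it leaves the maximizer unchanged. Finally, because $\log$ is strictly monotone, one may take logarithms without changing the $\arg\max$, yielding the pointwise mutual information form
\[
    a_* \;=\; \underset{a \in \mathcal{A}}{\arg\max} \; \log\frac{\mathbb{P}_*(r_T=1 \mid s_t, a)}{\mathbb{P}_*(r_T=1 \mid s_t)}.
\]

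There is no real obstacle here; the lemma is essentially a bookkeeping exercise connecting the DP definitions to the binary discriminator. The only minor subtlety I would flag is the implicit assumption that $V_*(s_t) > 0$ so that the rescaling and the logarithm are well-defined; states from which $r_T = 1$ is unreachable under any policy would have to be excluded (or handled by convention), but for any state from which the goal is attainable the argument goes through verbatim.
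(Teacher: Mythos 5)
Your proposal is correct and matches the paper's own proof essentially step for step: collapse the return to the single binary terminal reward to get the probability expressions, then note that dividing by the action-independent $V_*(s_t)$ and taking a monotone $\log$ leaves the $\arg\max$ unchanged. Your extra remark that $V_*(s_t)>0$ is needed for the rescaling and logarithm to be well-defined is a sensible refinement the paper leaves implicit.
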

\begin{proof}
    By definition,
    \begin{align*}
        V_\pi(s_t) &= \mathbb{E}(G_t | S_t = s_t) \\
        &= \mathbb{E}(R_T | S_t = s_t) \\
        &= 1 \cdot \mathbb{P}_\pi(r_T=1 | S_t = s_t) + 0 \cdot \mathbb{P}_\pi(r_T=1 | S_t = s_t) \\
        &= \mathbb{P}_\pi(r_T=1 | S_t = s_t)
    \end{align*}
    Similarly for $Q_\pi(s_t, a_t)$. By definition, the optimal policy $a_*$ is given by
    \begin{align*}
        a_* &= \underset{a_t \in \mathcal{A}}{\arg\max} \; Q_*(s_t, a_t) \\
        &= \underset{a_t \in \mathcal{A}}{\arg\max} \;\frac{Q_*(s_t, a_t)}{V_*(s_t)} \\
        &= \underset{a_t \in \mathcal{A}}{\arg\max} \;\frac{\mathbb{P}_*(r_T=1 | s_t, a_t)}{\mathbb{P}_*(r_T=1 | s_t)} \\
        &= \underset{a_t \in \mathcal{A}}{\arg\max} \;\log\frac{\mathbb{P}_*(r_T=1 | s_t, a_t)}{\mathbb{P}_*(r_T=1 | s_t)}
    \end{align*}
    The second equation hold because $V_*(s_t)$ is a constant once $s_t$ is given.
\end{proof}

In other words, for a DP setup with only binary $r_T$, the optimal policy is to select the action that is most correlated with $r_T = 1$.

\subsection{Construction for Theorem \ref{theorem:classifier_guidance}}

The optimal policy in Lemma \ref{lemma:binary_discriminator} is a deterministic policy. For most LLM decoding, we use  sampling, instead of greedy sampling, to increase response diversity and avoid repetitions. With sampling, we sample $a_t$ according to the policy $\pi(\cdot|s_t)$. Therefore, the optimal stochastic policy for temperature sampling should maximize the expectation over the distribution of $\pi$.
\begin{equation*}
    \pi_* = \underset{\pi}{\arg\max} \left\{\mathbb{E}_{\pi(\cdot|s_t)}\left(\log\frac{\mathbb{P}_\pi(r_T=1 | s_t, \cdot)}{\mathbb{P}_\pi(r_T=1 | s_t)}\right) \right\}
\end{equation*}
This stochastic policy is guided by the discriminator $D$ and is optimized for attributions. In practice, the policy should be optimized for both attributions and sensibleness. Since the pretrained generative model $\mathbb{P}_G$ is optimized for sensibleness, we can add a reward $-D_{KL}\left(\pi(\cdot|s_t)||\mathbb{P}_G(\cdot|s_t)\right)$ to prevent the policy $\pi$ drifting away from $\mathbb{P}_G$. As a result, the overall reward is the weighted sum of two terms:
\begin{equation} \label{eq:optimal_policy_with_sampling_and_classifier_and_kl}
    \pi_* = \underset{\pi}{\arg\max} \left\{ \lambda \mathbb{E}_{\pi(\cdot|s_t)}\left(\log\frac{\mathbb{P}_\pi(r_T=1 | s_t, \cdot)}{\mathbb{P}_\pi(r_T=1 | s_t)}\right) - D_{KL}\left(\pi||\mathbb{P}_G\right) \right\}
\end{equation}

This completes the construction of the action-state value for Theorem \ref{theorem:classifier_guidance}. For the derivation of the optimal policy $\pi_*$, please refer to the Appendix \ref{sec:proof_of_theorem}.

\subsection{Construction for Theorem \ref{theorem:classifier_free}}

For Theorem \ref{theorem:classifier_free}, we replace all appearance of $\log\frac{\mathbb{P}_*(r_T=1 | s_t, a_t)}{\mathbb{P}_*(r_T=1 | s_t)}$ by $\log\frac{\mathbb{P}_G(a_t | s_t)}{\mathbb{P}_G(a_t | s^-_t)}$ in Theorem \ref{theorem:classifier_guidance} and Appendix \ref{sec:proof_of_theorem_policy_discriminator}, and this completes the proof. We have two interpretations for this replacement.

In the first interpretation, the former and latter terms can be rewritten as $\pmi(r_T=1, a_t|s_t)$ and $\pmi(e, a_t|s_t^-)$, respectively. Effectively, the former calculates how the current action $a_t$ affects $y$, once fully decoded, is attributable to the evidence $e$ as suggested by the discriminator $D$. The latter, on the other hand, calculates how the current action $a_t$ correlates to the evidence $e$ directly as suggested by the generator $\mathbb{P}_G$.

In other words, both approaches calculate how $a_t$ improves attributions, either through the discriminator $D$ or through the generator $\mathbb{P}_G$. Both approaches are valid and it is hard to say which is better without any additional information.

In the second interpretation, we rewrite equation (\ref{eq:optimal_policy_with_sampling_and_classifier_and_kl}) with the replacement and approximate $\mathbb{P}_G$ by the policy $\pi$ as suggested by the second term in the right hand side,
\begin{align*}
    \pi_* &= \underset{\pi}{\arg\max} \left\{ \lambda \mathbb{E}_{\pi(\cdot|s_t)}\left(\log\frac{\mathbb{P}_G(\cdot | s_t)}{\mathbb{P}_G(\cdot | s^-_t)}\right) - D_{KL}\left(\pi||\mathbb{P}_G\right) \right\} \\
    &\approx \underset{\pi}{\arg\max} \left\{ \lambda \mathbb{E}_{\pi(\cdot|s_t)}\left(\log\frac{\pi(\cdot | s_t)}{\pi(\cdot | s^-_t)}\right) - D_{KL}\left(\pi||\mathbb{P}_G\right) \right\} \\
    &= \underset{\pi}{\arg\max} \left\{\lambda \cdot D_{KL}\left(\pi(\cdot | s_t)||\pi(\cdot | s_t^-)\right) - D_{KL}\left(\pi||\mathbb{P}_G\right)\right\}
\end{align*}
In other words, the optimal policy is a tradeoff between two terms, the first term promotes its ability to correlate the response with the evidence $e$, and the second term prevents it to drift away from the pretrained model $\mathbb{P}_G$.

\subsection{Construction for Theorem \ref{theorem:policy_kl}}

The construction of the action-state value function for Theorem \ref{theorem:policy_kl} is evident. The action-state value function is a weighted sum of two terms: $-D_{KL}\left(\pi||\mathbb{P}_G\right)$ prevents the optimal policy from drifting away from the pretrained policy $\mathbb{P}_G$; $-H(\pi)$ reduces entropy or diversity.

For simplicity, we will first ignore $h$ in $\lambda(s_t, s_t^-)$ for now and assume $\lambda(s_t, s_t^-) = D_{KL}\left(\mathbb{P}_G||\mathbb{P}_G^-\right)$. The weight $\lambda$ is adjusted based on how relevant is this decoding step to the presence of the evidence~$e$.
\begin{itemize}
    \item If $D_{KL}\left(\mathbb{P}_G||\mathbb{P}_G^-\right)$ is small, the presence of the evidence is irrelevant to the token distribution in the current decoding step, we optimize for the first term that encourages the policy to be close to the pretrained model $\mathbb{P}_G$;
    \item If $D_{KL}\left(\mathbb{P}_G||\mathbb{P}_G^-\right)$ is large, the presence of the evidence matters, so we optimize the policy for both terms: close to $\mathbb{P}_G$ (by the first term) and not evenly distributed (by the second term). This uneven distribution turns out to be that with lower temperature.
\end{itemize}

Applying the monotonically increasing function $h$ reshape the KL-divergence. In the original work~\citep{chang2023kldivergence} with $h(x) = 2^{\frac{x}{\sigma}} - 1$, the function $h$ avoid penalizing $x$ when $x < \sigma$ but apply significant penalty when $x > \sigma$. In other words, the function $h$ is just a convenient utility to reshape the KL-divergence with a single hyperparameter $\sigma$ that defines the threshold.
\section{Informational Interpretations} \label{sec:interpretation}

In constructing the action-state value functions, we already introduced several information-theoretic terms in the formulations, including KL-divergence, entropy, and cross-entropy. These terms are highly correlated to the property of the resulting decoder algorithms, including sensibleness, attribution, and diversity. Let's first inspect each component and its corresponding property:
\begin{itemize}
    \item The pretrained model $\mathbb{P}_G$ is an anchor for sensibleness, attribution, and diversity;
    \item The negative KL-divergence, $-D_{KL}\left(\pi||\mathbb{P}_G\right)$, ensures the policy $\pi$ to stochastically approximate $\mathbb{P}_G$;
    \item The negative cross-entropy, $-H(\pi, \mathbb{P}_G)$, ensures the policy $\pi$ to deterministically and greedily approximate $\mathbb{P}_G$;
    \item The entropy $H(\pi)$ promotes diversity;
    \item The approximate mutual information, $\hat{I}$, promotes a posterior attribution\footnote{We use a priori attribution to denote the attribution metric before sampling and a posterior attribution to denote the attribution metric after sampling.\label{note_a_posterior}} in either discriminative and generative way:
    \begin{itemize}
      \item Discriminative: $\hat{I}(a_t;r_T=1|s_t) = \mathbb{E}_{\pi(\cdot|s_t)}\left(\log\frac{\mathbb{P}_\pi(r_T=1 | s_t, \cdot)}{\mathbb{P}_\pi(r_T=1 | s_t)}\right)$
      \item Generative: $\hat{I}(a_t;e| s_t^-) = \mathbb{E}_{\pi(\cdot|s_t)}\left(\log\frac{\mathbb{P}_G(\cdot|s_t)}{\mathbb{P}_G(\cdot|s_t^-)}\right)$
    \end{itemize}
    This is an approximation since we use different distributions for the expectation and for the inner PMIs.
    \item The dynamic weight $\lambda(s_t, s_t^-)$ opportunistically promotes the term it is paired with when a priori attribution is relevant.
    \item Although $D_{KL}$ in $\lambda(s_t, s_t^-)$ is constructed in a generative way in Theorem \ref{theorem:policy_kl}, the construction in a discriminative way, $D_{KL}(\mathbb{P}_\pi(r_T=1 | s_t, \cdot)||\mathbb{P}_\pi(r_T=1 | s_t))$, is also valid.
\end{itemize}
With informational interpretations, we rewrite the theorems and summarize them in Table \ref{table:theorems}.

\begin{table}
  \caption{Theorems with Informational Interpretations}
  \label{table:theorems}
  \centering
  \begin{tabular}{lll}
    \toprule
    Theorem     & Decoding algorithm     & Action-state value function \\
    \midrule
    \ref{theorem:classifier_guidance}: Classifier guidance & $\left(\frac{\mathbb{P}_\pi(r_T=1 | s_t, a_t)}{\mathbb{P}_\pi(r_T=1 | s_t)}\right)^\lambda \mathbb{P}_G$ & $\lambda\cdot \hat{I}(a_t;r_T=1|s_t) - D_{KL}\left(\pi||\mathbb{P}_G\right)$ \\
    \ref{theorem:classifier_free}: Classifier-free guidance & $\left(\frac{\mathbb{P}_G(a_t|s_t)}{\mathbb{P}_G(a_t|s_t^-)}\right)^\lambda \mathbb{P}_G$ & $\lambda \cdot \hat{I}(a_t;e| s_t^-) - D_{KL}\left(\pi||\mathbb{P}_G\right)$ \\
    \ref{theorem:policy_kl}: \begin{tabular}{@{}l} KL-divergence guided\\temperature sampling \end{tabular}  & $(\mathbb{P}_G)^{\lambda(s_t, s_t^-) + 1}$ & $-\lambda(s_t, s_t^-) H(\pi, \mathbb{P}_G) - D_{KL}\left(\pi||\mathbb{P}_G\right)$ \\
    \ref{theorem:policy_t}: Temperature sampling & $(\mathbb{P}_G)^\frac{1}{T}$ & $-\left(\frac{1}{T}-1\right) H(\pi, \mathbb{P}_G) - D_{KL}\left(\pi||\mathbb{P}_G\right)$ \\[1.5ex]
    \ref{theorem:policy_greedy}: Greedy & $\mathbbm{1}_{a_t = \underset{a}{\arg\max} \; \mathbb{P}_G}$ & $-H(\pi, \mathbb{P}_G)$ \\
    \bottomrule
  \end{tabular}
\end{table}

For classical temperature sampling algorithm in Theorem \ref{theorem:policy_t}, the action-state value function is a weighted sum between the negative KL-divergence, $-D_{KL}(\pi||\mathbb{P}_G)$, and the negative entropy, $-H(\pi)$. As we decrease temperature from $T=1$ towards $T = 0$, the action-state value function starts with purely $-D_{KL}(\pi||\mathbb{P}_G)$ (that binds $\pi$ to $\mathbb{P}_G$) and gradually adds $-H(\pi)$ (that reduces entropy or diversity). Finally when these two terms are equally weighted, they sum up to $-H(\pi, \mathbb{P}_G)$ which defines the greedy algorithm.

While all classical temperature sampling algorithms have a fixed $T$, the work in Theorem \ref{theorem:policy_kl} takes a step further with a dynamic adjustable $T = \left(\lambda(s_t, s_t^-) + 1\right)^{-1}$. By relaxing the constraint of having a constant $T$, it can be opportunistically optimized for different action-state value functions at each decoding step according to its relevance to the evidence $e$. In either cases, there is a clear notion of the tradeoff between the anchor point (delegated by $-D_{KL}(\pi||\mathbb{P}_G)$) and the diversity (delegated by $-H(\pi, \mathbb{P}_G)$ or $-H(\pi)$).

Finally, the works in Theorem \ref{theorem:classifier_guidance} and Theorem \ref{theorem:classifier_free} promotes posterior attributions by adding the approximate mutual information, either guided by the discriminator $D$ or generator $\mathbb{P}_G$. The balance between the approximate mutual information and negative KL-divergence determines how far a distribution can drift from the anchor point for better attributions. There is also a clear notion of the tradeoff between the anchor point (delegated by $-D_{KL}(\pi||\mathbb{P}_G)$) and the attribution (delegated by $I(a_t;r_T=1|s_t)$ or $I(a_t;e| s_t^-)$).

In summary, most of the decoding algorithms compose of a tradeoff between the anchor point and a certain desired property, such as diversity or attributions. Each of the desired property has the respective delegation in the action-state value function space, except for the sensibleness. One mitigation to the sensibleness is to dynamically adjust the weight to the 
delegation to avoid overemphasizing the importance of the property in all decoding steps.

\section{Conclusion}

In this work we proposed a theoretical framework for formulating decoder algorithms with dynamic programming and information theory. We first formulated language modeling as a dynamic programming problem. Next, we constructed the action-state value functions for classical and recent sampling algorithms in the literature. Finally, we interpreted the terms in the action-state value functions with information theoretic implications. The framework provides an abstraction of the decoding algorithms design and makes it clear what each algorithm is optimized for. This helps to arbitrate decoder design when tradeoffs are involved.

\begin{ack}
We would like to thank David Reitter, Renat Aksitov, Abdullatif Köksal for many useful discussions in developing the theory. We would also like to thank Tu Vu, Cicero Nogueira dos Santos, and Tania Bedrax-Weiss for useful feedback and for reviewing the paper.
\end{ack}

\bibliographystyle{plainnat}
\bibliography{main.bib} 

\appendix \label{sec:appendix}

\section{Mathematical Preliminaries}

\subsection{Dynamic Programming} \label{sec:dp_definitions}

We follow the convention in~\citep{silver2015} to formulate large language model (LLM) decoder sampling as a dynamic programming (DP) problem.

\begin{definition}[Markov Decision Process]
A Markov Decision Process (MDP) is defined as a tuple of $\langle \mathcal{S}, \mathcal{A}, \mathcal{P}, \mathcal{R}, \gamma \rangle$, where
\begin{itemize}
    \setlength\itemsep{0em}
    \item State space $\mathcal{S}$ defines the set of all possible states that the system may be in;
    \item Action space $\mathcal{A}$ defines the set of all possible actions that an agent can take;
    \item State transition matrix $\mathcal{P}$ defines the probability of the next state given the current state and action, i.e. $\mathcal{P}^a_{ss'}=\mathbb{P}(S_{t+1}=s'|S_t=s,A_t=a)$;
    \item Reward function $\mathcal{R}$ maps the current state and action to the reward incurred at the next time step, i.e. $\mathcal{R}^a_s = \mathbb{E}(R_{t+1} | S_t = s, A_t = a)$
    \item Discount factor $\gamma \in [0, 1]$ penalizes the long-term dependencies.
\end{itemize}
\end{definition}

\begin{definition}[Policy] \label{def:policy}
A policy $\pi$ of an agent is a function that assigns probability distribution to actions for a given state.
\[
    \pi(a|s) = \mathbb{P}(A_t=a | S_t=s)
\]
\end{definition}

\begin{definition}[Return]
The return $G_t$ is the reward-to-go from time step $t$.
\[
    G_t = R_{t+1} + \gamma R_{t+2} + \dots = \sum_{k=0}^\infty \gamma^k R_{t+1+k}.
\]
\end{definition}

\begin{definition}[State Value Function]
The state value function $V_{\pi}(s)$ is the expected return when an agent starts from state $s$ and thereafter acts according to policy $\pi$.
\[
    V_{\pi}(s) = \mathbb{E}(G_t | S_t=s)
\]
\end{definition}

\begin{definition}[Action-State Value Function]
The action-state value function $Q_\pi(s, a)$ is the expected return when an agent starts from state s, takes action a, and thereafter acts according to policy $\pi$.
\[
    Q_\pi(s, a) = \mathbb{E}(G_t | S_t=s, A_t=a)
\]
\end{definition}

\begin{definition}[Optimal Action-State Value Function]
The optimal action-state value function $Q_*(s, a)$ is defined as
\[
    Q_*(s, a) = \max_\pi Q_\pi(s, a)
\]
An optimal policy for $Q_*(s, a)$ is denoted as $\pi_*$.  
\end{definition}

\begin{theorem}[Deterministic Optimal Policy]
A deterministic optimal policy can be constructed by
\[
    \pi_*(a|s) = \begin{cases}1 & \text{if \;\;} a = \underset{a \in \mathcal{A}}{\arg\max}\, Q_*(s,a) \\ 0 & \text{otherwise.} \end{cases}
\]
\end{theorem}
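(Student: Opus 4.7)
The plan is to verify that the greedy policy $\pi_*$ defined in the statement attains $Q_*(s,a)$ at every state-action pair, so that it is indeed an optimal policy in the sense of the preceding definition. The natural tool is the Bellman optimality equation for $Q_*$, which expresses $Q_*(s,a)$ as the expected immediate reward plus the discounted expectation over next states of $\max_{a'}Q_*(s',a')$. Combined with the basic identity $V_*(s)=\max_{a}Q_*(s,a)$, this lets me identify the greedy choice as one that exactly realizes the supremum appearing inside the recursion.

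Concretely, I would proceed in four steps. First, I would argue that $Q_*$ is well-defined and satisfies the Bellman optimality equation; here I would appeal to the contraction property of the Bellman optimality operator on bounded functions (using $\gamma\in[0,1)$, or finite horizon $T$ as in the LM setup where the episode terminates at $\langle EOS\rangle$). Second, I would verify that $\pi_*$ is a legitimate policy in the sense of Definition \ref{def:policy}: under the standing assumption that $\mathcal{A}$ is finite (as is the case for a vocabulary $\mathcal{V}$), the $\arg\max$ is nonempty and any deterministic tie-breaking rule yields a valid probability distribution assigning mass one to a maximizer. Third, I would compute $V_{\pi_*}(s)$ by unrolling the definition of the return: by construction, whenever $\pi_*$ is followed, the action chosen satisfies $Q_*(s, a)=\max_{a'}Q_*(s,a')=V_*(s)$, so taking expectations and using the Bellman equation yields $V_{\pi_*}(s)=V_*(s)$ and therefore $Q_{\pi_*}(s,a)=Q_*(s,a)$. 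Fourth, I would invoke the definition $Q_*(s,a)=\max_{\pi}Q_\pi(s,a)$ to conclude that $\pi_*$ is optimal.

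The main obstacle I expect is the third step, namely rigorously passing from the pointwise identity $Q_*(s, \pi_*(s))=V_*(s)$ to the equality $V_{\pi_*}=V_*$ of the full return. The cleanest way around this is a policy-improvement argument: if one sets $\pi_0=\pi_*$ and iteratively replaces any step with the greedy choice w.r.t.\ $Q_{\pi_k}$, the resulting sequence $V_{\pi_k}$ is monotone and bounded above by $V_*$, and under the contraction property it converges to the fixed point $V_*$; since $\pi_*$ is already greedy with respect to $Q_*$, the iteration stabilizes immediately and $V_{\pi_*}=V_*$ follows. An alternative, which avoids any iterative argument in the finite-horizon LM setting, is a straightforward backward induction on $t=T,T-1,\dots,0$ using the deterministic transitions $\mathcal{P}^a_{ss'}=\mathbbm{1}_{s'=s\cup a}$ and the fact that $r_t=0$ for $t<T$; either formulation suffices to close the argument.
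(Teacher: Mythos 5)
The paper itself gives no proof of this statement: it appears in Appendix A.1 as background material, imported from the cited dynamic-programming conventions, so there is no in-paper argument to compare yours against. Your proposal is the standard textbook route (Bellman optimality equation for $Q_*$, legitimacy of the greedy policy over the finite vocabulary, then showing the greedy policy attains $V_*$), and it is essentially sound, with one caveat worth fixing. The policy-iteration phrasing of your third step is circular as written: if you start the iteration at $\pi_0=\pi_*$, the improvement step is greedy with respect to $Q_{\pi_*}$, not $Q_*$, so you cannot conclude ``immediate stabilization'' from the fact that $\pi_*$ is greedy with respect to $Q_*$ unless you already know $Q_{\pi_*}=Q_*$, which is precisely what is to be proved. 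The standard repair is to observe that $Q_*$ satisfies the Bellman \emph{expectation} equation for $\pi_*$ (because the greedy action realizes the maximum inside the optimality equation) and then invoke uniqueness of the fixed point of that operator, yielding $Q_{\pi_*}=Q_*$ and hence $V_{\pi_*}=V_*$. Your backward-induction alternative accomplishes exactly this in the setting the paper actually uses ($\gamma=1$, episodes terminating at $\langle EOS\rangle$, deterministic transitions $\mathcal{P}^a_{ss'}=\mathbbm{1}_{s'=s\cup a}$, rewards zero before $T$); since the contraction argument requires $\gamma<1$ or a termination guarantee anyway, the finite-horizon induction is the version you should make primary, with the contraction/fixed-point argument as the general-case remark.
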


\subsection{Pointwise Mutual Information (PMI)} \label{sec:define_pmi}

\begin{definition}[Pointwise Mutual Information]
The pointwise mutual information of a pair of discrete distributions $x$ and $y$ is defined as
\[
    \pmi(x, y) = \log\frac{\mathbb{P}(x, y)}{\mathbb{P}(x)\mathbb{P}(y)} = \log\frac{\mathbb{P}(x| y)}{\mathbb{P}(x)} = \log\frac{\mathbb{P}(y| x)}{\mathbb{P}(y)}.
\]
\end{definition}

\section{Proof of Theorems}  \label{sec:proof_of_theorem}

\subsection{Proof of Theorem \ref{theorem:classifier_guidance}} \label{sec:proof_of_theorem_policy_discriminator}

Following the derivation in~\citep{rafailov2023direct} Appendix A.1, we have the explicit optimal policy
\begin{align*}
    \pi_* &= \frac{\left(\frac{\mathbb{P}_\pi(r_T=1 | s_t, a_t)}{\mathbb{P}_\pi(r_T=1 | s_t)}\right)^\lambda \mathbb{P}_G(a_t|s_t)}{K(s_t)} \\
    &\propto \left(\frac{\mathbb{P}_\pi(r_T=1 | s_t, a_t)}{\mathbb{P}_\pi(r_T=1 | s_t)}\right)^\lambda \mathbb{P}_G(a_t|s_t)
\end{align*}
where $K(s_t) = \sum_{a \in \pi(\cdot|s_t)}\left(\frac{\mathbb{P}_\pi(r_T=1 | s_t, a)}{\mathbb{P}_\pi(r_T=1 | s_t)}\right)^\lambda \mathbb{P}_G(a|s_t)$ be a normalization factor.

\subsection{Proof of Theorem \ref{theorem:policy_kl}}

We can rewrite the action-state value function as follows:
\begin{align*}
    \pi_*(a_t|s_t) &= \underset{\pi}{\arg\max}\left\{
    -(\lambda(s_t, s_t^-)+1) D_{KL}\left(\pi||\mathbb{P}_G\right) - \lambda(s_t, s_t^-) H\left(\pi\right) \right\} \\
    &= \underset{\pi}{\arg\max}\left\{
    -\lambda(s_t, s_t^-) (H(\pi) + D_{KL}(\pi||\mathbb{P}_G)) - D_{KL}\left(\pi||\mathbb{P}_G\right) \right\} \\
    &= \underset{\pi}{\arg\max}\left\{
    -\lambda(s_t, s_t^-) H\left(\pi, \mathbb{P}_G\right) - D_{KL}\left(\pi||\mathbb{P}_G\right) \right\} \\
    &= \underset{\pi}{\arg\max}\left\{
    \lambda(s_t, s_t^-)\,\mathbb{E}_{\pi(\cdot|s_t)}\log\mathbb{P}_G(\cdot|s_t) - D_{KL}\left(\pi||\mathbb{P}_G\right) \right\} \\
    &= \underset{\pi}{\arg\max}\left\{
    \mathbb{E}_{\pi(\cdot|s_t)}\log\mathbb{P}_G(\cdot|s_t)^{\lambda(s_t, s_t^-)} - D_{KL}\left(\pi||\mathbb{P}_G\right) \right\}    
\end{align*}

Deriving the explicit optimal policy from the action-state function follows that in~\citep{rafailov2023direct} Appendix A.1.

\end{document}